%% file: main.tex
\documentclass{article}

\usepackage[preprint]{neurips_2025}

\usepackage{amsmath,amssymb,amsfonts,amsthm}
\usepackage{booktabs}
\usepackage{graphicx}
\usepackage{multirow}
\newtheorem{proposition}{Proposition}
\newtheorem{remark}{Remark}

\newcommand{\sg}{\operatorname{stopgrad}}
\newcommand{\R}{\mathbb{R}}
\newcommand{\E}{\mathbb{E}}

\title{Conditional Predictive Sufficient Statistics\\ for Visual Representation Learning}
\author{Yuzhou Hong\\
Zhejiang Sci-Tech University\\
\texttt{hongyuzhou@zstu.edu.cn}}

\begin{document}
\maketitle

\begin{abstract}
A useful visual representation is a statistic of the observed past that retains the latent factors shared with the future and discards patch-private noise.
We formalize this requirement as a conditional predictive sufficient statistic (CPSS).
Under a shared-factor model of image patches, the mutual information between the past and the next patch equals the information the past carries about the shared factor, up to a remainder that the next patch itself fails to reveal.
Predicting the next patch embedding with a cosine loss is maximum likelihood for a von Mises--Fisher model of that embedding's direction, and is therefore a tractable surrogate for the predictive information.
The same population loss is also minimized by a constant embedding, so stop-gradient does not by itself select the sufficient statistic; it only blocks the symmetric gradient that implements the constant solution in one step.
The regression target is a shallow embedding, which forces the network output back into that shallow range and leaves the sufficient statistic in intermediate blocks.
Small causal Transformers on MNIST and CIFAR-10 are used as diagnostics, not as a leaderboard.
On MNIST the future shift and the stop-gradient move probe accuracy by tens of points, and the CPSS readout peaks before the output.
On CIFAR-10, with the same short budget and no augmentation, every objective lands near a linear classifier on pixels.
What still matches the derivation is the geometry: the CPSS output is a worse readout than its best intermediate block, next-pixel regression does not pay that penalty, and removing the stop-gradient collapses the effective rank of the embedding even when the pretext loss looks perfect.
\end{abstract}

\section{INTRODUCTION}
\label{sec:intro}

Self-supervised vision now trains strong encoders by predicting held-out content.
The content being predicted has moved from pixels \citep{vincent2008dae,he2022mae} and discrete tokens \citep{bao2022beit} to latent embeddings \citep{oord2018cpc,assran2023jepa,bardes2024vjepa,oquab2023dinov2}.
What these objectives share is easier to state than the architectures that implement them.
The encoder should keep what the future of the same scene still depends on, and it should drop what no future observation can confirm.

This paper isolates that statement and gives it a short derivation.
An image is a sequence of patches generated by a shared scene factor $s$ together with patch-private noise.
Conditional on $s$, the next patch is independent of the past.
The mutual information between past and future then collapses to information about $s$ (Section~\ref{sec:theory}).
We call any representation that preserves this predictive information a conditional predictive sufficient statistic.

The derivation also fixes an objective.
If the target embedding is modeled as a direction on the sphere, the negative cosine similarity used by next-embedding prediction \citep{xu2025nextembeddingpredictionmakesstrong} is the negative log-likelihood of a von Mises--Fisher distribution.
The objective is not an arbitrary similarity bonus.
It is directional maximum likelihood for the conditional law of the next embedding.

Two consequences follow, and both are easy to miss if one only looks at the loss value.
First, a constant embedding attains the best possible cosine.
Stop-gradient removes the gradient through the target, but the constant map remains a global minimizer of the population risk.
Non-collapse is a property of the optimization path, conditional on the embedding staying non-degenerate, and not a property of the risk alone.
Second, because the target is the shallow patch embedding, the last block is trained to land in that shallow range.
The statistic of $s$ is an intermediate activation, computed in order to choose the right direction and then partly discarded by the final map.
Layer-wise probes are therefore predicted to peak in the middle of the network.

We check these predictions with a deliberately small causal Transformer, one forward pass, and four objectives, on MNIST \citep{lecun1998mnist} and CIFAR-10 \citep{krizhevsky2009cifar}.
The experiments are diagnostics of the derivation, not a claim about ImageNet accuracy.
They ask whether the future shift, the stop-gradient, the choice of embedding versus pixels, and the layer index move linear probe accuracy in the direction the argument requires.

\section{RELATED WORK}
\label{sec:related}

\paragraph{Predictive information.}
\citet{bialek2001predictive} defined the predictive information of a process as the mutual information between its past and its future, and argued that this information is the part of the past worth learning.
\citet{tishby2000ib} compress the past subject to preserving information about a relevance variable.
The same predictive idea appears in visual cortex as a claim that neurons code the error of a latent prediction \citep{rao1999predictive}, and in slow feature analysis as a claim that the useful factors are the ones that vary slowly across a sequence \citep{wiskott2002slow}.
CPSS uses the next observation itself as that relevance variable.
No label and no hand-specified augmentation enters the bottleneck.
The shared-factor assumption does the work that an explicit penalty $I(x;z)$ does in the information bottleneck.

\paragraph{Contrastive and non-contrastive prediction.}
Earlier visual pretexts predict spatial context, motion, color, rotation, or a jigsaw permutation \citep{pathak2016context,wang2015video,zhang2016color,gidaris2018rotation,noroozi2016puzzles,pathak2017move}.
Contrastive learning starts from an invariant mapping and a Siamese network \citep{hadsell2006dimensionality,bromley1993siamese}, then uses a memory bank or instance discrimination \citep{wu2018unsupervised,he2020moco} and a Vision Transformer \citep{chen2021mocov3,chen2020simclr}.
InfoMax estimates the same mutual information directly \citep{hjelm2018dim,oord2018cpc}.
Siamese methods replace negatives by a stop-gradient or a teacher \citep{grill2020byol,chen2021simsiam,caron2021dino,tarvainen2017mean,oquab2023dinov2,simeoni2025dinov3}.
Our collapse remark is the same asymmetry, specialized to a causal next-embedding loss whose predictor is the whole backbone.

\paragraph{Latent prediction in vision.}
Masked models reconstruct pixels or token indices \citep{he2022mae,bao2022beit,zhou2022ibot,devlin019bert}.
Joint-embedding predictive architectures predict target-encoder embeddings of masked blocks \citep{assran2023jepa,bardes2024vjepa}.
Autoregressive pretraining instead predicts pixels, discrete codes, or continuous tokens \citep{chen2020igpt,el2024scalable,fini2025multimodal,van2017vqvae,esser2021vqgan,lee2022rqvae,chang2022maskgit,li2023mage,tian2024var,li2024mar,sun2024llamagen,fan2024fluid,pang2024randar,wu2025dcar}.
Language modeling is the same next-step idea on text \citep{radford2018gpt,radford2019gpt2,brown2020gpt3}.
The backbone is a residual network or a Transformer \citep{he2016resnet,vaswani2017transformer,dosovitskiy2020vit,touvron2021deit,chu2024visionllama}, sometimes aligned to text or trained as a text-to-image generator \citep{radford2021clip,ramesh2021dalle}, and sometimes used to denoise images \citep{ho2020ddpm,song2021ddim,dhariwal2021diffusion,peebles2023dit}.
Next-embedding prediction (NEPA) removes the auxiliary target encoder and the pixel head: a causal Transformer predicts the next patch embedding from the model's own embedding layer, with a cosine loss and a stop-gradient on the target \citep{xu2025nextembeddingpredictionmakesstrong}.
CPSS is a derivation for that objective.
The only inductive bias left to analyze is the future shift inside one causal stack.
The price is that the target embedding is shallow.
The theory treats that price as the reason intermediate layers, rather than the output, carry the sufficient statistic.

\section{CONDITIONAL PREDICTIVE SUFFICIENCY}
\label{sec:theory}

\subsection{Shared-factor patches}

Let an image be a sequence of patches $x_{1:T}$ on a space $\mathcal{X}$, generated by a latent scene factor $s\in\mathcal{S}$ and patch-private noise:
\begin{equation}
s\sim p(s), \qquad x_t\mid s \sim p(x_t\mid s),
\label{eq:gen}
\end{equation}
with the $x_t$ conditionally independent given $s$.
The private noise of patch $t$ may affect $x_t$ and no other patch.
Class identity, object extent, and coarse layout are the intended contents of $s$.
High-frequency texture that is not shared with a neighboring patch is private noise.
The assumption is false for a patch that is statistically independent of the rest of the image, and the predictive information is then zero.
Natural images are used because neighboring patches share $s$.

A representation is a measurable map $f$ producing $z_t=f(x_t)\in\R^d$, possibly followed by a causal predictor $h$ that may mix $z_{\le t}$.
Write $r_t$ for an intermediate state of $h$ and $\hat z_{t+1}$ for its output.
Downstream usefulness means that a simple head can recover $s$, or a label that is a function of $s$, from $r_t$.

\begin{proposition}[Predictive information identifies $s$]
\label{prop:mi}
Assume \eqref{eq:gen}, so that $x_{t+1}\perp x_{\le t}\mid s$.
Then
\begin{equation}
I(x_{\le t};x_{t+1})
=
I(x_{\le t};s)
-
I(x_{\le t};s\mid x_{t+1}).
\label{eq:mi}
\end{equation}
\end{proposition}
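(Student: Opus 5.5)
The identity follows from expanding the three-variable mutual information $I(x_{\le t}; x_{t+1}, s)$ with the chain rule in two different orders, and then using the conditional independence $x_{t+1}\perp x_{\le t}\mid s$ from \eqref{eq:gen} to remove one term. We write $A = x_{\le t}$, $B = x_{t+1}$ and $S = s$, and assume all mutual informations are finite, which holds for instance when $I(A;S)<\infty$. Without this assumption, subtracting terms in the identity could produce an undefined $\infty-\infty$.

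First, expand by conditioning on $S$:
\[
I(A; B, S) = I(A; S) + I(A; B\mid S) = I(A;S),
\]
because $I(A;B\mid S)=0$ is exactly the conditional independence $A\perp B\mid S$. Second, expand the same quantity by conditioning on $B$ first:
\[
I(A; B, S) = I(A; B) + I(A; S\mid B).
\]
Setting the two expansions equal and rearranging gives $I(A;B) = I(A;S) - I(A;S\mid B)$, which is \eqref{eq:mi}. A corollary of the same argument is the data-processing inequality $I(x_{\le t};x_{t+1})\le I(x_{\le t};s)$, since $I(A;S\mid B)\ge 0$. This supports reading the remainder term as "what $x_{t+1}$ fails to reveal."

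The only real obstacle is measure-theoretic. The patches, and possibly $s$, are general (continuous) variables, so the chain rule should be invoked in its general form: define mutual information via the Kullback--Leibler divergence between the joint law and the product of marginals, or as the supremum over finite partitions. In that setting the chain rule $I(A;B,S)=I(A;S)+I(A;B\mid S)$ holds for standard Borel spaces (Dobrushin/Pinsker). We will cite this rather than reprove it, and state the finiteness assumption explicitly. In the discrete case the proof is just the entropy identity.
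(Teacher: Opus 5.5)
Your proposal is correct and follows the paper's own argument: expand $I(x_{\le t}; s, x_{t+1})$ by the chain rule in both orders, use conditional independence to set $I(x_{\le t};x_{t+1}\mid s)=0$, and equate the two expansions. Your finiteness assumption (guaranteed by $I(x_{\le t};s)<\infty$) and your note on the general measure-theoretic chain rule are sound refinements that the paper leaves implicit.
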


\begin{proof}
Conditional independence gives $I(x_{\le t};x_{t+1}\mid s)=0$.
The chain rule in either order \citep{cover2006elements} yields
\begin{align*}
I(x_{\le t};s,x_{t+1})
&=
I(x_{\le t};s) \\
&\quad + I(x_{\le t};x_{t+1}\mid s) \\
&=
I(x_{\le t};s),\\
I(x_{\le t};s,x_{t+1})
&=
I(x_{\le t};x_{t+1}) \\
&\quad + I(x_{\le t};s\mid x_{t+1}).
\end{align*}
Equating the two expansions gives \eqref{eq:mi}.
\end{proof}

The second term of \eqref{eq:mi} is the part of $s$ that the next patch does not reveal.
When a single neighboring patch is itself informative about the object, that remainder is small and
\begin{equation}
I(x_{\le t};x_{t+1})\approx I(x_{\le t};s).
\label{eq:approx}
\end{equation}
Private noise in the past does not appear in $x_{t+1}$, so it does not increase the left-hand side.
Maximizing predictive information therefore keeps $s$ and drops the private component.
This is an information bottleneck whose relevance variable is the future observation, not a label.

\begin{remark}[Data processing]
For any representation $z_{\le t}$ of the past,
$I(z_{\le t};x_{t+1})\le I(x_{\le t};x_{t+1})$.
Equality holds when $z_{\le t}$ preserves the predictive sigma-algebra.
A representation that attains the upper bound is sufficient for predicting the future, hence, under \eqref{eq:approx}, sufficient for $s$.
We call it a conditional predictive sufficient statistic.
\end{remark}

\subsection{Cosine loss as directional likelihood}

Mutual information is not a training objective.
Let $u_t=z_t/\|z_t\|_2$ and let the conditional law of the next direction be von Mises--Fisher with concentration $\kappa>0$ and mean direction $\mu_\theta(z_{\le t})$ on the unit sphere \citep{fisher1953dispersion}:
\begin{equation}
p(u_{t+1}\mid z_{\le t})
=
C_d(\kappa)
\exp\bigl(\kappa\, u_{t+1}^\top \mu_\theta(z_{\le t})\bigr).
\label{eq:vmf}
\end{equation}
The normalizing constant $C_d(\kappa)$ does not depend on $\mu_\theta$.
The negative log-likelihood of one target is therefore
\begin{equation}
-\log p(u_{t+1}\mid z_{\le t})
=
-\kappa\cos\bigl(u_{t+1},\mu_\theta(z_{\le t})\bigr)
+\mathrm{const}.
\label{eq:nll}
\end{equation}
Averaging \eqref{eq:nll} over positions is exactly the negative cosine used below, up to the positive scale $\kappa$.
Fitting the cosine loss is maximum likelihood for \eqref{eq:vmf}.
It raises a variational surrogate of $I(z_{\le t};u_{t+1})$ inside the von Mises--Fisher family.
It is not the InfoNCE bound, and the two surrogates should not be interchanged: InfoNCE uses negative samples to control the partition function, whereas \eqref{eq:nll} treats the partition function as a constant on the sphere.

\subsection{The constant embedding is still optimal}

\begin{proposition}[Collapse is a global minimizer]
\label{prop:collapse}
Let $\mathcal{D}(a,b)=-\cos(a,b)$ on nonzero vectors, and let
\begin{equation}
\mathcal{L}(f,h)
=
\E\Bigl[
\mathcal{D}\bigl(h(z_{\le t}),\,\sg(z_{t+1})\bigr)
\Bigr],
\label{eq:loss}
\end{equation}
with $z_t=f(x_t)$.
If the hypothesis class contains a constant nonzero embedding $f\equiv c$ and a predictor with $h(c,\ldots,c)\parallel c$, then $\mathcal{L}=-1$, which is the pointwise lower bound of $\mathcal{D}$.
The same value is attained if the stop-gradient is removed.
\end{proposition}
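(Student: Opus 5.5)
The plan is to establish three facts in order: the pointwise lower bound of $\mathcal{D}$, that the constructed pair attains it for every input, and that the stop-gradient does not enter the value of \eqref{eq:loss}.

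\textbf{Lower bound.} By Cauchy--Schwarz, $\cos(a,b)=a^\top b/(\|a\|_2\|b\|_2)\le 1$ for all nonzero $a,b$. Hence $\mathcal{D}(a,b)\ge -1$ pointwise, with equality exactly when $b=\lambda a$ for some $\lambda>0$. Taking expectations gives $\mathcal{L}(f,h)\ge -1$ for every admissible pair $(f,h)$. The only care needed is that $\mathcal{D}$ is defined only on nonzero vectors, so the infimum is over pairs where $h(z_{\le t})$ and $z_{t+1}$ are almost surely nonzero.

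\textbf{Attainment.} Take $f\equiv c$ with $c\neq 0$. Then $z_t=c$ for every $t$ and every image, so the input to $h$ is the deterministic sequence $(c,\ldots,c)$. By hypothesis $h(c,\ldots,c)\parallel c$. The one step to watch is that ``parallel'' must mean a positive multiple, $h(c,\ldots,c)=\lambda c$ with $\lambda>0$; if $\lambda<0$ the cosine would be $-1$ and the loss $+1$. I would read $\parallel$ as same-direction and state this explicitly. For instance, the identity map on the last position gives $\lambda=1$ and is the natural witness. Then $\mathcal{D}(\lambda c,c)=-1$ for every sample and every position, so the expectation equals $-1$ regardless of $p(s)$ or the patch distribution. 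Together with the lower bound, $(f,h)$ is a global minimizer of the population risk.

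\textbf{Stop-gradient.} $\sg$ is the identity in the forward pass; it only alters the gradient, not the value. Removing it therefore leaves the integrand, and hence $\mathcal{L}$, unchanged at every $(f,h)$. So the same pair attains $-1$ without the stop-gradient, and the lower bound argument applies verbatim.

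There is no real obstacle here. The only points requiring care are the sign convention in $\parallel$ and the remark that the statement concerns values of the risk, not gradients or the training path.
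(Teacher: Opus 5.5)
Your proof is correct and follows the paper's argument: with $f\equiv c$ both cosine arguments normalize to the same point on the sphere, so every summand is $-1$, and removing the stop-gradient changes only gradients, not values. Your Cauchy--Schwarz lower bound and your reading of $\parallel$ as a positive multiple spell out what the paper leaves implicit; its phrase ``the same point on the sphere'' already presupposes $\lambda>0$.
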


\begin{proof}
For $z_t=c\neq 0$ at every position, both arguments of the cosine are parallel to $c$ after normalization, so each term equals $-1$.
Stop-gradient does not change the value of $\mathcal{L}$, only its gradient.
\end{proof}

Proposition~\ref{prop:collapse} is the reason a small training loss is not evidence of a sufficient statistic.
The population risk does not prefer $s$ over a constant.
Stop-gradient changes the dynamics: the target $z_{t+1}$ receives no gradient, so $f(x_{t+1})$ is updated only when that same embedding appears as context for a later prediction.
The same asymmetry is what stop-gradient Siamese training relies on \citep{chen2021simsiam,grill2020byol}: it can move a non-constant initialization away from collapse even though collapse remains stationary.
We do not import their theorem as a proof for a full causal Transformer.
We use it as the reason the experiments must measure degeneracy directly, through probe accuracy and the effective rank of $z$, and not through the training loss.

\subsection{Where the statistic sits}

Factor the predictor as $h=\psi\circ\phi$, where $r_t=\phi(z_{\le t})$ is an intermediate state and $\hat z_{t+1}=\psi(r_t)$ is forced by \eqref{eq:nll} toward $\sg(f(x_{t+1}))$.
The map $f$ is a single patch projection.
Its range is a shallow function of local pixels.
Matching $\psi(r_t)$ to $f(x_{t+1})$ ties the network output to that range.

Any information about $s$ that is not already a linear function of $f(x_{t+1})$ still has to be present in $r_t$, because $r_t$ is what $\psi$ reads when it chooses the direction.
After $\psi$ is applied, part of that information is no longer needed to represent the shallow target, and the output is a poor place to read $s$ back out.
The operational sufficient statistic is $r_t$, not $\hat z_{t+1}$.

This predicts a concrete probe pattern.
A linear classifier on the last-token state should be more accurate at an intermediate block than at the patch embedding, which has not mixed the past, and more accurate than at the final output, which has been pulled back toward $f$.
The pattern is a property of the target being shallow.
A pixel head does not tie the output to $f$, so the same mid-network peak is not required for next-pixel regression.

\subsection{What the future shift contributes}

If the cosine is computed between $h(z_{\le t})$ and $z_t$ rather than $z_{t+1}$, the identity map attains $\mathcal{L}=-1$ without reading any other patch.
Proposition~\ref{prop:mi} does not apply: the objective no longer measures $I(x_{\le t};x_{t+1})$.
A last-token state that merely copies the last patch embedding is a statistic of one patch, not of the shared factor.
Removing the temporal shift is therefore predicted to destroy downstream probe accuracy even when the cosine loss looks solved.

Next-pixel regression does optimize a future shift, so Proposition~\ref{prop:mi} still describes the information that is available.
The difference is the sigma-algebra being fitted.
Pixel regression spends capacity on private residuals that remain predictable at short range, such as edge continuation.
Embedding regression, once $f$ has discarded some of that residual, fits a coarser conditional law.
On a dataset whose label is a coarse function of $s$, the embedding objective should hand the linear probe an easier statistic.
On a dataset whose label is already a simple function of pixels, the gap can shrink, because the pixel statistic is already sufficient.

\section{A MINIMAL ESTIMATOR}
\label{sec:method}

The estimator used in the experiments is NEPA \citep{xu2025nextembeddingpredictionmakesstrong}, the smallest architecture that still has the objects in Section~\ref{sec:theory}: a patch map $f$, a causal stack $h$, a future shift, and a stop-gradient.

An image is split into non-overlapping patches of size $p\times p$.
A shared linear patch embedding produces $z_t\in\R^d$, and learned absolute positions are added.
A pre-norm causal Transformer of $L$ blocks \citep{dosovitskiy2020vit,vaswani2017transformer}, with a final LayerNorm, produces states $h_1,\ldots,h_T$.
The CPSS loss is
\begin{equation}
\mathcal{L}_{\mathrm{CPSS}}
=
-\frac{1}{T-1}
\sum_{t=1}^{T-1}
\cos\bigl(h_t,\,\sg(z_{t+1})\bigr),
\label{eq:cpss}
\end{equation}
with both vectors normalized.
Teacher forcing feeds the true embeddings.
There is no pixel decoder, no target encoder, and no negative sample.

Three controls share the same backbone and optimization.
\textsc{nostop} removes $\sg$ in \eqref{eq:cpss}.
\textsc{noshift} replaces the target $z_{t+1}$ by $\sg(z_t)$.
\textsc{pixel} regresses the raw next patch, flattened, with an affine head on $h_t$ and a mean squared error.
Table~\ref{tab:pred} lists the prediction each control is designed to test.

\begin{table}[t]
\caption{Each control removes one hypothesis of the derivation. The predicted probe change is relative to CPSS.}
\label{tab:pred}
\centering
\small
\begin{tabular}{lll}
\toprule
Run & Removed piece & Predicted probe \\
\midrule
CPSS & none & mid-layer peak \\
nostop & stop-gradient & collapse, low accuracy \\
noshift & future shift & loss solved, probe weak \\
pixel & embedding target & weaker class statistic \\
\bottomrule
\end{tabular}
\end{table}

The readout is fixed before looking at test numbers.
We take the last token of each block, including the patch embedding as block 0.
Under causal attention that token is a function of the whole image.
A linear classifier is trained on the frozen training-set states and evaluated on the test set.
We also report the effective rank of the patch embedding, $\exp(H(\bar\sigma))$, the exponential of the entropy of the normalized singular-value spectrum \citep{roy2007effective}, computed on a $4096$-sample Gram matrix.
A collapsed embedding has small effective rank.
Training loss is recorded and is not used as a measure of representation quality, by Proposition~\ref{prop:collapse}.

\section{EXPERIMENTS}
\label{sec:exp}

\subsection{Protocol}

MNIST is $28\times 28$ grayscale, split with patch size $7$ into $16$ tokens.
CIFAR-10 is $32\times 32$ RGB, split with patch size $4$ into $64$ tokens, using the standard $50{,}000$/$10{,}000$ split as released in the \texttt{uoft-cs/cifar10} parquet files.
Pixels are scaled to $[0,1]$ and are not augmented, so the only source of shared structure is the image itself.
The backbone has width $192$, depth $6$, $3$ heads, and MLP width $768$.
Optimization is AdamW with learning rate $10^{-3}$, weight decay $0.05$, batch size $256$, gradient clipping at $1$, and a cosine schedule \citep{loshchilov2018adamw,loshchilov2017cosdecay}.
MNIST runs for $8$ epochs and CIFAR-10 for $16$ epochs.
The linear probe is AdamW for $25$ epochs at learning rate $10^{-2}$ on standardized features.
One NVIDIA H200 is used for each run.
The seed is $0$.
A linear classifier on flattened pixels, trained with the same probe, is the supervised baseline that does not use the predictive model.
Code and the result dump are included with this draft.

The runs are short on purpose.
They test the sign of each prediction in Table~\ref{tab:pred} on two datasets that fit a diagnostic budget.
They do not support a comparison with fully trained contrastive or masked models.

\subsection{Layer-wise probes}

Figure~\ref{fig:layers} plots test accuracy against depth.
Table~\ref{tab:probe} reports the patch embedding, the best block, and the output.

\input{results_tables.tex}

\begin{figure}[t]
\centering
\includegraphics[width=\linewidth]{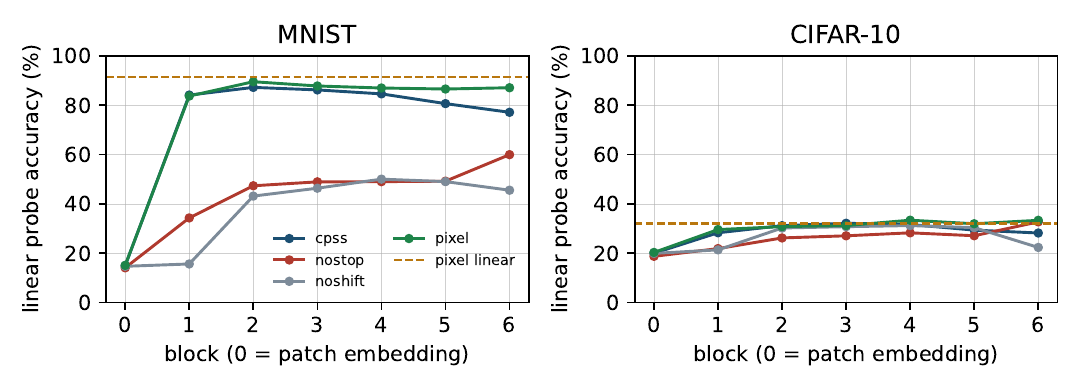}
\caption{Last-token linear probe by block.
Block 0 is the patch embedding.
The dashed line is a linear classifier on raw pixels.}
\label{fig:layers}
\end{figure}

On both datasets the CPSS probe rises from the patch embedding to an intermediate block and then falls at the output.
On MNIST the peak is block 2 at 87.3\%, and the output is 77.2\%.
On CIFAR-10 the peak is block 3 at 32.2\%, and the output is 28.2\%.
That drop is the geometry predicted by tying $\psi(r_t)$ to a shallow $f$.
Next-pixel regression, whose target is not $f$, does not pay the same penalty: its MNIST output stays at 87.1\% against a peak of 89.6\%, and its CIFAR-10 output, 33.4\%, is the best layer of that run.

The future shift is visible in the loss and, on MNIST, in the probe.
Noshift drives the cosine to $-1.000$ on both datasets.
On MNIST its best probe is 50.1\%, against 87.3\% for CPSS.
On CIFAR-10 the best noshift probe, 31.2\%, is close to CPSS, but the output falls to 22.4\%, the lowest output in the CIFAR-10 table.
Copying the current embedding solves the pretext without building a last-token state that a linear head can read as a class.

Stop-gradient separates the loss from the spectrum of $z$.
Without it, the cosine reaches $-1.000$ and the effective rank of the patch embedding falls from 17.1 to 4.0 on MNIST and from 21.0 to 2.1 on CIFAR-10.
The MNIST probe falls with the rank, from 87.3\% to 60.0\%, and the best state moves to the output, which is the opposite of the CPSS pattern.
The CIFAR-10 probe does not fall: nostop's best accuracy is 32.7\%, level with CPSS.
A low-rank embedding can still align one direction with the label on this short run.
Proposition~\ref{prop:collapse} is about the risk and the spectrum, not a guarantee that every linear probe goes to chance.
The CIFAR-10 nostop row is the case the guarantee does not cover.

\subsection{MNIST versus CIFAR-10}

MNIST labels are nearly a function of ink.
A linear classifier on pixels already reaches 91.6\%.
Next-pixel regression is the strongest predictive run at 89.6\%, CPSS is close at 87.3\%, and both sit below the pixel line.
The informative comparison is the ablation.
Removing the shift costs 37 points.
Removing the stop-gradient costs 27 points and collapses the rank.
The best CPSS state is block 2, not the output.

CIFAR-10 is not a scaled-up copy of that gap.
A linear pixel classifier scores 32.1\%, and every predictive run's best probe lies between 31.2\% and 33.4\%.
Sixteen epochs without augmentation do not produce a representation that beats pixels.
The derivation's coarse-label prediction, that an embedding target should hand the probe an easier statistic than pixels, is not supported at this budget.
What the budget does support is the placement rule.
CPSS reads out best at block 3 and worse at the output.
Pixel regression reads out best at the output.
Nostop minimizes the cosine and the rank together, while the probe stays flat.
Reporting CIFAR-10 as a win for next-embedding prediction would invent a margin the table does not contain.

\subsection{Degeneracy is not the loss}

\begin{figure}[t]
\centering
\includegraphics[width=0.62\linewidth]{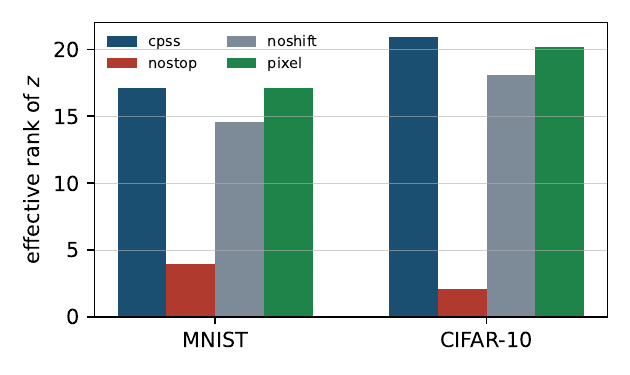}
\caption{Effective rank of the patch embedding $z$ after training.
CPSS keeps a high-rank embedding.
The no-stop-gradient run compresses it.}
\label{fig:rank}
\end{figure}

Figure~\ref{fig:rank} and the loss column of Table~\ref{tab:probe} separate optimization success from statistical success.
Noshift and nostop both reach a cosine of $-1.000$.
On MNIST that perfect pretext loss comes with probes of 50.1\% and 60.0\%, against 87.3\% for CPSS, whose cosine is only $-0.926$.
On CIFAR-10 the same perfect loss does not produce a worse probe than CPSS.
It does produce the rank collapse in Figure~\ref{fig:rank}.
A reader who ranked runs by pretext loss would put the collapsed embeddings first on both datasets.
The MNIST probe reverses that ranking.
The CIFAR-10 probe does not, which is why the rank is reported beside the accuracy.

\subsection{What the diagnostics do not show}

The probe is linear and the backbone is trained for a handful of epochs without augmentation.
A stronger head, a longer schedule, or a view-invariance loss could move the absolute numbers.
The MNIST output sits 10.1 points below the best CPSS block.
The CIFAR-10 output sits 4.0 points below.
A single seed does not support a claim about the exact block index, only about the direction of the gap.
The derivation assumes conditional independence given $s$, which is an idealization of CIFAR-10 object boundaries.
The experiments test the consequences that survive that idealization: future shift, stop-gradient, shallow targets, and intermediate readouts.

\section{DISCUSSION}
\label{sec:discuss}

The practical reading of CPSS is a placement rule for the representation.
If the pretraining target is the model's own patch embedding, downstream heads should read an intermediate block.
Reading the output asks the linear head to invert a map that was trained to forget $s$ down to the range of $f$.
This is a statement about this objective, not about every self-supervised Transformer.
An objective whose target already lives in a semantic space can put the statistic at the output.

The theoretical reading is a separation between three objects that are often treated as one.
Predictive information, under the shared-factor model, is the right population quantity.
The cosine loss is a directional likelihood for a surrogate of that quantity, and it is correctly specified only while $f$ stays non-degenerate.
Stop-gradient is a dynamical constraint that makes the non-degenerate regime reachable.
It is not an extra term that raises $I(z_{\le t};s)$ on the population risk.
Collapsing these three into the sentence ``the model maximizes mutual information'' hides the constant solution and hides the reason the output layer is the wrong place to probe.

Several gaps remain inside the argument.
We do not prove that gradient descent on a causal Transformer avoids the constant solution from a random initialization.
The von Mises--Fisher model fixes the radial coordinate by normalization, so the theory says nothing about how much class information sits in the embedding norm.
The remainder $I(x_{\le t};s\mid x_{t+1})$ is not estimated.
On images where the next patch is a blank background, that remainder is large and \eqref{eq:approx} fails, which is a prediction a spatially resolved probe could test.
Extending the same derivation to a masked, non-causal predictor would replace the past with a context set.
The information identity is unchanged.
The location of the statistic changes only if the target stops being a shallow embedding.

\section{APPLICATIONS}
\label{sec:apps}

The derivation is a placement and diagnostic rule for a NEPA encoder \citep{xu2025nextembeddingpredictionmakesstrong}.
It does not add a new training objective.
The four runs say which readings of that rule survive a short budget, and which do not.

\subsection{Read the intermediate block}

On both datasets the CPSS linear probe is higher at an intermediate block than at the output: block 2 versus the output on MNIST ($87.3\%$ against $77.2\%$), and block 3 versus the output on CIFAR-10 ($32.2\%$ against $28.2\%$).
A classification head, a segmentation head, or a generator conditioned on frozen NEPA features should therefore take its input from an intermediate block, not from the vector that was trained to match $f(x_{t+1})$.
The pixel-regression control is the contrast case.
Its target is not $f$, and its best probe sits at or next to the output: $87.1\%$ at the output against a peak of $89.6\%$ on MNIST, and $33.4\%$ at the output on CIFAR-10.
The rule is about the target, not about Transformers in general.

The same placement applies inside one image.
The last token is the only causal state that has seen every patch, so a global label should be read there.
A dense label should be read from the intermediate state at the token that covers the pixel, because that state has been trained to carry whatever is needed to name the next embedding, and the later blocks are trained to forget part of it.
We do not run a dense probe here.
The layer gap in Table~\ref{tab:probe} is the evidence the rule currently has.

\subsection{Do not rank runs by the pretext loss}

Noshift and nostop both finish at cosine $-1.000$.
CPSS finishes at $-0.926$ on MNIST and $-0.958$ on CIFAR-10, and it is the better MNIST representation by $27$ to $37$ points.
A checkpoint picked by the smallest cosine would be a collapsed or a copy solution.
The quantity to log beside the loss is the effective rank of $z$.
On nostop it falls to $4.0$ on MNIST and $2.1$ on CIFAR-10, while CPSS stays at $17.1$ and $21.0$.
Rank catches the CIFAR-10 nostop run that the probe misses: the probe stays near $33\%$, level with CPSS, even though the embedding spectrum has collapsed onto a few directions.
A low-rank code can still correlate with a $10$-way label.
It is a poor code for any label that needs more than those directions.

\subsection{Choose the target from the label}

Proposition~\ref{prop:mi} says the future shift identifies $s$.
It does not say that an embedding target is a better practical statistic than pixels.
MNIST is the case where pixels are already sufficient: the linear pixel baseline is $91.6\%$, next-pixel regression reaches $89.6\%$, and CPSS reaches $87.3\%$.
CIFAR-10 is the case where a short embedding objective was supposed to pull ahead, and it does not.
Every best probe lies between $31.2\%$ and $33.4\%$, against a pixel baseline of $32.1\%$.
The application is therefore conditional.
Use the embedding target when the downstream label is coarser than local pixels and the schedule is long enough for $f$ to drop private residuals.
Use a pixel target, or pixels themselves, when the label is a simple function of ink or color and the budget is a few epochs.
The CIFAR-10 tie is a failure of the second reading at this budget, not a failure of the shift itself: noshift still drives the CIFAR-10 output probe down to $22.4\%$.

\subsection{A check before a longer run}

Before scaling NEPA, the informative experiment is the one in Table~\ref{tab:pred}, not a larger backbone.
Four questions have answers on these two datasets.
Does the probe rise after the patch embedding and fall at the output? Yes for CPSS, on both datasets.
Does removing the shift solve the loss and hurt the last-token probe? Yes.
Does removing the stop-gradient collapse the rank of $z$? Yes, on both datasets.
Does removing it also collapse the probe? Only on MNIST.
A longer CIFAR-10 run is worth doing if it is aimed at that last question, or at beating the $32.1\%$ pixel line.
It is not needed to re-decide where the CPSS head should be attached.

\section{CONCLUSION}

Conditional predictive sufficiency says that a visual representation should be a sufficient statistic of the past for the shared factor that generates the future.
Next-embedding cosine prediction is the directional likelihood of that future, not a sufficient statistic by itself: the constant embedding solves the likelihood, and the shallow target pushes the statistic into intermediate blocks.
MNIST shows the shift and the stop-gradient in the probe.
CIFAR-10, at this budget, shows them in the output gap and in the spectrum of $z$, not in a higher accuracy than pixels.
The future shift and the stop-gradient are necessary, the pretext loss is not a quality metric, and the best linear readout of a next-embedding model is not its output.

\bibliography{refs}

\appendix
\section{Proof details for Proposition~\ref{prop:mi}}

Start from the definition $I(A;B)=H(A)-H(A\mid B)=H(B)-H(B\mid A)$.
For the joint variable $(s,x_{t+1})$,
\begin{align*}
I(x_{\le t};s,x_{t+1})
&=
H(s,x_{t+1})-H(s,x_{t+1}\mid x_{\le t}).
\end{align*}
Expand the second entropy by the chain rule:
\begin{align*}
H(s,x_{t+1}\mid x_{\le t})
&=
H(s\mid x_{\le t})+H(x_{t+1}\mid s,x_{\le t}).
\end{align*}
Conditional independence $x_{t+1}\perp x_{\le t}\mid s$ means
$H(x_{t+1}\mid s,x_{\le t})=H(x_{t+1}\mid s)$, and therefore
\begin{align*}
I(x_{\le t};x_{t+1}\mid s)
&=
H(x_{t+1}\mid s)-H(x_{t+1}\mid s,x_{\le t})
=
0.
\end{align*}
A second chain rule, conditioning first on $s$, gives
\begin{align*}
I(x_{\le t};s,x_{t+1})
&=
I(x_{\le t};s)+I(x_{\le t};x_{t+1}\mid s)
=
I(x_{\le t};s).
\end{align*}
Conditioning first on $x_{t+1}$ instead gives
\begin{align*}
I(x_{\le t};s,x_{t+1})
&=
I(x_{\le t};x_{t+1})+I(x_{\le t};s\mid x_{t+1}).
\end{align*}
The two expressions for the same mutual information are Proposition~\ref{prop:mi}.

The approximation \eqref{eq:approx} is not an identity.
The gap equals $I(x_{\le t};s\mid x_{t+1})=H(s\mid x_{t+1})-H(s\mid x_{\le t},x_{t+1})$, which is small when $x_{t+1}$ already determines $s$ or when the past adds little once the next patch is known.
Blank next-patches make $H(s\mid x_{t+1})$ large and the gap can dominate.

\section{VON MISES--FISHER DERIVATION}

The von Mises--Fisher density on the unit sphere $S^{d-1}$ with mean direction $\mu$ and concentration $\kappa\ge 0$ is
\begin{equation}
p(u)
=
C_d(\kappa)\exp(\kappa \mu^\top u),
\qquad
\|\mu\|_2=\|u\|_2=1,
\end{equation}
where
\begin{equation}
C_d(\kappa)
=
\frac{\kappa^{d/2-1}}{(2\pi)^{d/2} I_{d/2-1}(\kappa)}
\end{equation}
and $I_\nu$ is the modified Bessel function of the first kind.
For a fixed $\kappa$, $C_d(\kappa)$ does not depend on the predicted direction.
Substituting $\mu=\mu_\theta(z_{\le t})$ and $u=u_{t+1}$ produces \eqref{eq:nll}.
Summing over $t=1,\ldots,T-1$ and dropping $\kappa$ into the learning-rate scale produces $\mathcal{L}_{\mathrm{CPSS}}$.

The radial coordinate is not modeled.
If class information were carried only by $\|z_t\|$, this likelihood would ignore it.
The experiments therefore standardize features before the linear probe, which also discards global scale, and the probe result is a statement about directions.

\section{TRAINING AND PROBE DETAILS}

Both datasets use the public training and test splits: $60{,}000$/$10{,}000$ for MNIST and $50{,}000$/$10{,}000$ for CIFAR-10.
No validation split is held out.
The hyperparameters above were chosen to make eight short runs finish as diagnostics, and were not tuned against the test probe.
The pixel baseline uses the same linear-probe optimizer on raw flattened pixels, so the comparison holds the readout fixed.

Effective rank uses the singular values $\sigma_j$ of the $4096\times d$ matrix of centered training embeddings, converts them to $\bar\sigma_j=\sigma_j/\sum_k\sigma_k$, and returns $\exp(-\sum_j \bar\sigma_j\log\bar\sigma_j)$.
This equals $1$ for a rank-one embedding and grows as the spectrum flattens.

Parameter count is identical across the four objectives except for the pixel head, which adds one affine map from $d$ to the patch dimension and is used only by the pixel run.
The probe parameters are discarded after evaluation and are not part of the representation.

\section{DATA PROCESSING AND THE CONSTANT SOLUTION}

For any measurable map of the past, the data-processing inequality gives
$I(z_{\le t};x_{t+1})\le I(x_{\le t};x_{t+1})$,
with equality when $z_{\le t}$ is a sufficient statistic of $x_{\le t}$ for $x_{t+1}$.
Combined with Proposition~\ref{prop:mi}, a representation that attains the upper bound retains $I(x_{\le t};s)$ up to the remainder $I(x_{\le t};s\mid x_{t+1})$.
Nothing in this bound forces the representation to be non-constant.
A constant $z$ has mutual information zero with everything, and Proposition~\ref{prop:collapse} shows that the same constant attains the lower bound of the cosine.
The information bound and the training loss point in different directions.
Stop-gradient does not change the value of either one.
It changes only which parameters receive a gradient.

The proof of Proposition~\ref{prop:collapse} is the normalization identity.
If $f(x_t)=c\neq 0$ for every patch and $h$ returns a vector parallel to $c$, then both arguments of the cosine are the same point on the sphere, so every summand equals $-1$.
Removing $\sg$ does not change those values.
The constant map is therefore a global minimizer of the stopped and the unstopped losses.
Any claim that stop-gradient maximizes mutual information has to be a claim about the trajectory \citep{chen2021simsiam,grill2020byol}, and the experiments measure that claim with rank and probe accuracy rather than with the loss.

\section{THE GAP ON TWO PATCHES}

Take $T=2$, so the past is the single patch $x_1$ and the future is $x_2$.
Proposition~\ref{prop:mi} reduces to
\begin{equation}
I(x_1;x_2)=I(x_1;s)-I(x_1;s\mid x_2).
\end{equation}
Suppose $s$ is a class in a finite set and each patch is a noisy view of $s$.
If $x_2$ determines $s$, the second term is zero and predicting $x_2$ from $x_1$ is exactly as informative as predicting the class.
If $x_2$ is a blank background, $H(s\mid x_2)$ stays large, $I(x_1;x_2)$ can be near zero, and a perfect predictor of $x_2$ has learned nothing about $s$.
Raster order on CIFAR-10 produces many such pairs near the border of an object.
The identity \eqref{eq:approx} is then an average statement, not a per-position guarantee, which is one reason a $16$-epoch CIFAR-10 run can sit on the pixel baseline even while the MNIST gap is large.
Digits fill the canvas, so a neighboring patch is rarely blank.

\section{LAYERWISE NUMBERS}

Tables~\ref{tab:probe-full} and~\ref{tab:rank-full} expand Figure~\ref{fig:layers} and Figure~\ref{fig:rank}.
Block 0 is the patch embedding.
Block 6 is the final LayerNorm output.
Probes are test accuracy of a linear head on the last token.
Ranks are effective ranks of the same last-token states on a $4096$-sample slice of the training set.

\begin{table}[h]
\caption{Last-token linear probe accuracy (\%) at every block.}
\label{tab:probe-full}
\centering
\small
\begin{tabular}{llrrrrrrr}
\toprule
Data & Run & 0 & 1 & 2 & 3 & 4 & 5 & 6 \\
\midrule
MNIST & CPSS & 15.0 & 84.1 & 87.3 & 86.2 & 84.6 & 80.7 & 77.2 \\
MNIST & nostop & 14.1 & 34.3 & 47.4 & 49.0 & 49.0 & 49.2 & 60.0 \\
MNIST & noshift & 14.7 & 15.7 & 43.2 & 46.4 & 50.1 & 49.1 & 45.6 \\
MNIST & pixel & 15.1 & 83.8 & 89.6 & 87.8 & 87.0 & 86.6 & 87.1 \\
CIFAR-10 & CPSS & 20.0 & 28.3 & 31.2 & 32.2 & 31.7 & 29.4 & 28.2 \\
CIFAR-10 & nostop & 18.8 & 21.9 & 26.2 & 27.1 & 28.3 & 27.1 & 32.7 \\
CIFAR-10 & noshift & 19.9 & 21.4 & 30.2 & 30.8 & 31.2 & 30.4 & 22.4 \\
CIFAR-10 & pixel & 20.3 & 29.6 & 30.8 & 31.2 & 33.4 & 32.0 & 33.4 \\
\bottomrule
\end{tabular}
\end{table}

\begin{table}[h]
\caption{Effective rank of the last-token state at every block.}
\label{tab:rank-full}
\centering
\small
\begin{tabular}{llrrrrrrr}
\toprule
Data & Run & 0 & 1 & 2 & 3 & 4 & 5 & 6 \\
\midrule
MNIST & CPSS & 17.1 & 90.6 & 102.1 & 96.7 & 74.8 & 56.9 & 48.0 \\
MNIST & nostop & 4.0 & 11.3 & 15.5 & 17.2 & 17.9 & 18.3 & 52.3 \\
MNIST & noshift & 14.6 & 16.7 & 20.2 & 20.6 & 20.4 & 20.0 & 19.5 \\
MNIST & pixel & 17.1 & 93.8 & 103.5 & 84.2 & 75.2 & 71.5 & 80.5 \\
CIFAR-10 & CPSS & 21.0 & 57.0 & 59.9 & 57.4 & 45.7 & 36.1 & 29.6 \\
CIFAR-10 & nostop & 2.1 & 2.8 & 3.2 & 3.4 & 3.4 & 3.5 & 37.9 \\
CIFAR-10 & noshift & 18.1 & 22.7 & 24.7 & 25.1 & 25.3 & 25.2 & 23.4 \\
CIFAR-10 & pixel & 20.1 & 58.9 & 48.4 & 42.2 & 39.8 & 38.7 & 39.7 \\
\bottomrule
\end{tabular}
\end{table}

Two patterns in the rank table are sharper than the probe table.
CPSS rank rises through the middle blocks and then falls toward the output, on both datasets, which is the spectral version of the claim that $\psi$ maps a richer state back toward the range of $f$.
Nostop stays near rank $2$--$18$ until the output, where the rank jumps ($52.3$ on MNIST, $37.9$ on CIFAR-10).
That jump is not a recovered sufficient statistic.
The output of nostop is the vector being matched to a collapsed target, and a collapsed target can be fit by a higher-rank function of a low-rank input without putting the class back into $z$.
The probe on CIFAR-10 nostop is highest at that jumped output and still only $32.7\%$.

\section{COMPUTING THE ESTIMATOR}

One CPSS step is a causal self-attention stack on $T$ tokens of width $d$ and $L$ blocks, plus a cosine over $T-1$ pairs.
Attention is $O(L T^2 d)$ and the MLP is $O(L T d \cdot d_{\mathrm{mlp}})$.
Here $d=192$, $L=6$, $d_{\mathrm{mlp}}=768$, $T=16$ on MNIST and $T=64$ on CIFAR-10.
The pixel head adds one matrix of shape $d\times (c p^2)$ and is used only by that run.
The linear probe is a separate $d\times 10$ map, trained for $25$ epochs on frozen features and then discarded.
No run uses a target encoder, a negative queue, or a second forward pass.
The eight runs, including probes, fit on one NVIDIA H200.
The seed is $0$.
Hyperparameters were fixed before the test probes were computed and were not searched.

\begin{table}[h]
\caption{Settings shared by all eight runs.}
\label{tab:hparams}
\centering
\small
\begin{tabular}{ll}
\toprule
Setting & Value \\
\midrule
Width / depth / heads & $192$ / $6$ / $3$ \\
MLP width & $768$ \\
Optimizer & AdamW, learning rate $10^{-3}$, weight decay $0.05$ \\
Batch size & $256$ \\
Schedule & cosine, grad clip $1$ \\
Epochs & $8$ (MNIST), $16$ (CIFAR-10) \\
Patch & $7$ (MNIST), $4$ (CIFAR-10) \\
Augmentation & none \\
Probe & linear, $25$ epochs, learning rate $10^{-2}$ \\
Seed & $0$ \\
\bottomrule
\end{tabular}
\end{table}

\end{document}

%% file: results_tables.tex
\begin{table}[t]
\caption{Last-token linear probe (\%). Emb.\ is block 0, Out is the final block, Best is the highest probe and its block index. Rank is the effective rank of the patch embedding. A linear classifier on raw pixels scores 91.6\% on MNIST and 32.1\% on CIFAR-10.}
\label{tab:probe}
\centering
\small
\begin{tabular}{llrrrrr}
\toprule
Run & Loss & Emb. & Best & Out & Rank \\
\midrule
\multicolumn{6}{l}{\textit{MNIST}} \\
CPSS & $-0.926$ & 15.0 & 87.3 (2) & 77.2 & 17.1 \\
nostop & $-1.000$ & 14.1 & 60.0 (6) & 60.0 & 4.0 \\
noshift & $-1.000$ & 14.7 & 50.1 (4) & 45.6 & 14.6 \\
pixel & $0.027$ & 15.1 & 89.6 (2) & 87.1 & 17.1 \\
\midrule
\multicolumn{6}{l}{\textit{CIFAR-10}} \\
CPSS & $-0.958$ & 20.0 & 32.2 (3) & 28.2 & 21.0 \\
nostop & $-1.000$ & 18.8 & 32.7 (6) & 32.7 & 2.1 \\
noshift & $-1.000$ & 20.0 & 31.2 (4) & 22.4 & 18.1 \\
pixel & $0.012$ & 20.3 & 33.4 (4) & 33.4 & 20.2 \\
\bottomrule
\end{tabular}
\end{table}